\documentclass{article}

%


 \usepackage[final,nonatbib]{nips_2017}

\usepackage{amsmath, amssymb, graphicx, url, algorithm2e}
\usepackage{paralist}
\usepackage{booktabs}
\usepackage{multirow}
\usepackage[utf8]{inputenc} 
\usepackage[T1]{fontenc}    
\usepackage{hyperref}       
\usepackage{url}            
\usepackage{booktabs}       
\usepackage{amsfonts}       
\usepackage{nicefrac}       
\usepackage{microtype}      
\usepackage{amsmath, amsthm, amssymb, multirow, paralist}

\newtheorem{thm}{Theorem}
\newtheorem{prop}{Proposition}
\newtheorem{lem}{Lemma}

 \newtheorem{ass}{Assumption}

\title{A Simple Analysis for Exp-concave  Empirical  Minimization  with Arbitrary Convex Regularizer}

%

\author{
 Tianbao Yang$^\dagger$, Zhe Li$^\dagger$, Lijun Zhang$^\ddagger$\\
  $^\dagger$Department of Computer Science, The University of Iowa\\
  $^\ddagger$National
Key Laboratory for Novel Software Technology, Nanjing University\\
    \texttt{tianbao-yang, zhe-li-1@uiowa.edu, zhanglj@lamda.nju.edu.cn} \\
}

\def \E {\mathrm{E}}
\def \x {\mathbf{x}}

\def \z {\mathbf{z}}
\def \u {\mathbf{u}}
\def \H {\mathcal{H}}
\def \w {\mathbf{w}}
\def \R {\mathbb{R}}

\def \Z {\mathcal{Z}}

\def \W {\mathcal{W}}
\def \N {\mathcal{N}}

\def \v {\mathbf{v}}

\def \B {\mathcal{B}}

\def \Fh {\widehat{F}}

\def \wt {\widetilde{\w}}

\def \P {\mathbb{P}}

\def \wh {\widehat{\w}}

\def \B {\mathcal{B}}

\setcounter{lem}{0}
\setcounter{thm}{0}
\setcounter{ass}{0}
\setcounter{definition}{0}

\begin{document}

\maketitle

\begin{abstract}
In this paper, we present a simple analysis of {\bf fast rates} with {\it high probability} of {\bf empirical  minimization} for {\it stochastic composite optimization} over a finite-dimensional bounded convex set with exponential concave loss functions and  an arbitrary convex regularization. To the best of our knowledge, this result is  the first of its kind. As a byproduct, we can directly obtain the fast rate  with {\it high probability} for exponential concave empirical risk minimization with and without any convex regularization, which not only extends existing results of empirical risk minimization but also provides a unified framework for analyzing exponential concave empirical  risk minimization with and without {\it any} convex regularization.  Our proof is very simple only exploiting  the covering number of a finite-dimensional bounded set and a concentration inequality of random vectors. 
\end{abstract}

\section{Introduction}
Stochastic minimization with exponential concave (or exp-concave in short)  loss functions can find many applications in machine learning, e.g., linear regression, logistic regression, support vector machine with squared hinge loss and portfolio optimization.  There are two popular approaches for stochastic optimization. The first approach is called {\it Sample Average Approximation} (also known as empirical risk minimization in machine learning~\cite{vapnik-1998-statistical}), in which a set of $n$ i.i.d examples are drawn from the underlying distribution and an empirical risk minimization problem is solved. The second approach is {\it Stochastic Approximation}~\cite{SA:Springer} (closely related to online optimization), which iteratively learns the model from  randomly sampled examples.  Comparing with stochastic approximation, empirical risk minimization is deemed as more general and usually achieves better performance in practice. Importantly, it is amenable to any optimization algorithms. 

Fast rates of optimization with exponential concave functions in online setting or in stochastic setting have attracted a bulk of studies. In the seminal paper by Hazan et al.~\cite{ML:Hazan:2007}, the authors proposed an online Newton step (ONS) algorithm - a reminiscent of Newton-Raphson method for offline optimization, which achieves an $O(d\log n)$ regret bound with $d$ being the dimension of the problem and $n$ being the total number of iterations. With the standard trick of online-to-batch conversion~\cite{TIT04:Bianchi,DBLP:conf/nips/KakadeT08}, one can obtain a fast convergence rate of $O(d\log n/n)$ recently achieved in stochastic setting~\cite{arXiv:1605.01288,DBLP:conf/colt/Mahdavi0J15}.  However, the computational cost of ONS scales badly with the dimensionality of the problem (with a $d^4$ factor)~\cite{DBLP:conf/nips/KorenL15}, which may prohibit its application to high-dimensional problems. 

In terms of  empirical risk minimization (ERM), it was not until recently that the fast rates for exp-concave risk minimization were established. Koren \& Levy~\cite{DBLP:conf/nips/KorenL15} obtained  the first result  for exp-concave risk minimization  that ERM is able to attain fast generalization rates, i.e., an $O(d/n)$ expected convergence bound  - difference between the risk of the learned model and the risk of the optimal model. Strictly speaking, their guarantee  is not for the solution to ERM but for the solution to a penalized ERM by adding a strongly convex regularizer to the ERM objective. 
Gonen and Shalev-Shwartz~\cite{DBLP:journals/corr/abs/1601.04011} derived a similar expectational fast rate for supervised learning with exp-concave losses.   Recently, Mehta~\cite{arXiv:1605.01288} established high probability fast rates for exp-concave empirical risk minimization, which is only worse by a factor of $\log(n)$ than the in-expectation rate. 

This paper is motivated by solving the following {\bf stochastic composite optimization} problem:
\begin{align}\label{eqn:opt}
\w_* = \arg\min_{\w\in\W} \left[P(\w) \triangleq \E_{\z\sim \mathcal P}[f(\w, \z)] + R(\w)\right],
\end{align}
where the objective consists of a stochastic component that is the expectation over a random function $f(\w, \z)$ and a deterministic component $R(\w)$.  In this paper, we will assume: (i) $\W\subseteq\R^d$ is a compact and convex set; (ii) $f(\w,\z)$ is a smooth and $\beta$-exp-concave function of $\w$ for any $\z$, and is Lipschitz continuous over the bounded domain $\W$.  To make it general, we do not impose strong convexity or exp-concave or smoothness assumption on $R(\w)$ except for convexity. 

We study the convergence of  the {\bf empirical minimizer} of~(\ref{eqn:opt}), i.e., 
\begin{align}\label{eqn:opt2}
\wh = \arg\min_{\w\in\W} \left[P_n(\w) \triangleq \frac{1}{n}\sum_{i=1}^nf(\w, \z_i) + R(\w)\right],
\end{align}
 where $\z_1, \ldots, \z_n$ are i.i.d samples from $\P$. Our major goal here is to establish the fast convergence rate of the empirical minimizer in terms of $P(\wh) - P(\w_*)$.  This is in contrast to many previous works focusing on convergence analysis of stochastic approximation algorithms~\cite{Lan:2010:Optimal}  for solving~(\ref{eqn:opt}).   It is noticeable that  many efficient optimization algorithms are available for solving~(\ref{eqn:opt2})~\cite{xiao2014proximal,DBLP:conf/nips/DefazioBL14}.  In machine learning applications, the deterministic component $R(\w)$ is usually a regularizer that enforces  some kind of structure over the model $\w$. Many studies in machine learning and statistics have found that using a certain kind of regularization that incorporates  prior knowledge about the model can lead to great improvements on performance in many applications. 

To establish the convergence rate of the empirical minimizer~(\ref{eqn:opt2}), one may consider to define a new loss $h(\w, \z) = f(\w, \z) + R(\w)$ such that $P(\w) = \E_{\z}[h(\w, \z)]$ and $P_n(\w) = \frac{1}{n}\sum_i h(\w, \z_i)$, and then leverage the existing theory to prove the convergence rate.  However, the combined function $h(\w, \z)$ is not necessarily   an exp-concave function (see {\bf Example 2} below). Therefore, all the previous fast rates analysis for exp-concave empirical risk minimization cannot carry over to the considered problem. As a consequence,  the standard generalization theory of ERM~\cite{COLT:Shalev:2009}  applied to~(\ref{eqn:opt2}) can only guarantee an $O(\sqrt{d/n})$ convergence rate, which is worse than $O(d\log n/n)$ - the fast rate that we aim to establish. 

\textbf{Contributions.} The main contribution of this work is  a simple analysis of a fast rate of $O(d\log n/n + d\log(1/\delta)/(n\beta))$ with high probability $1-\delta$ for the  empirical  minimization~(\ref{eqn:opt2}) with a $\beta$-exp-concave losses $f(\w, \z)$ and an arbitrary convex regularizer $R(\w)$ in terms of  $P(\wh) - P(\w_*)$. Our proof is simple and elementary, which only utilizes  the covering number of $\W$ and a concentration inequality of random vectors. 

\section{Comparison with Previous Works}
There are extensive studies about fast rates of ERM. Due to limit of space, the review below focuses on closely related work.   
In the three recent studies~\cite{DBLP:conf/nips/KorenL15,DBLP:journals/corr/abs/1601.04011,arXiv:1605.01288}, the focus is to establish fast rates in terms of risk minimization without a regularizer, i.e, 
\begin{align}\label{eqn:opt3}
\min_{\w\in\W} F(\w) \triangleq \E_{\z}[f(\w, \z)],
\end{align}
where $f(\w, \z)$ is a $\beta$-exp-concave function. As reasoned above, the fast rates in these studies do not carry over to the minimization problem~(\ref{eqn:opt}) with an arbitrary regularizer. 

 Koren \& Levy~\cite{DBLP:conf/nips/KorenL15} studied the convergence of a penalized/regularized empirical risk minimizer by 
\begin{align}\label{eqn:opt4}
\wt=\arg\min_{\w\in\W} \left[\Fh_n(\w) \triangleq \frac{1}{n}\sum_{i=1}^nf(\w, \z_i) + \frac{1}{n}g(\w)\right]. 
\end{align}
They assumed that $g(\w)$ is a $1$-strongly regularizer  w.r.t the Euclidean norm and is bounded over $\W$.  Gonen and Shalev-Shwartz~\cite{DBLP:journals/corr/abs/1601.04011}  focused on the risk minimization with generalized linear model : 
\begin{align}
\min_{\w\in\W} L(\w) \triangleq \E_{(\x, y)\sim D}[\phi_y(\w^{\top}\x)], 
\end{align}
which is a special case of the general minimization problem~(\ref{eqn:opt}). Under the assumption that $\phi_y(\w^{\top}\x)$ is a $\beta$-exp-concave function of $\w$, they provided an expected convergence rate of the empirical risk minimizer, which is in the same order as the result in~\cite{DBLP:conf/nips/KorenL15}, i.e., $O(d/(n\beta))$. 

There are three differences between our work and these two works: (i)  their results of fast rate are with respect to $\min_{\w\in\W}F(\w)$, where $F(\w)$ does not include any regularizer; in contrast our result is respect to $\min_{\w\in\W}F(\w) + R(\w)$; (ii) the strongly convex penalization term $g(\w)$ in~\cite{DBLP:conf/nips/KorenL15} is artificially added to the ERM to facilitate the analysis; in contrast the arbitrary convex regularizer $R(\w)$ in this paper is built into the objective; (iii) their fast rate guarantee is in expectation while our fast rate guarantee is in high probability. In light of these differences, we can see that our result is more general and much stronger. In particular, when setting $R(\w)=0$ in our problem, we obtain the fast rate with high probability of the empirical risk minimizer, which is only worse by a factor of $\log(n)$ than the in-expectation rate in~\cite{DBLP:journals/corr/abs/1601.04011,DBLP:conf/nips/KorenL15}. Additionally, a similar high probability risk bound {\bf with respect to $\min_{\w\in\W}F(\w)$ for any regularized empirical risk minimizer~(\ref{eqn:opt4})} can be easily derived in our framework as long as $g(\x)$ is convex and  bounded over $\W$ (see Theorem~\ref{thm:smooth:convex2}). 

A more recent work by Mehta~\cite{arXiv:1605.01288} establishes a fast rate of $O(d\log n/(n\beta ))$ for the exp-concave ERM.  His analysis for the empirical risk minimizer is based on the connection between exp-concavity and the stochastic mixability  condition, and he exploited the heavy machinery developed in their previous work for fast rate analysis of empirical risk minimizer under the stochastic mixability condition~\cite{DBLP:conf/nips/MehtaW14}. However, this analysis does not apply to the regularized empirical risk minimizer~(\ref{eqn:opt4}) with an arbitrary convex regularizer. Admittedly, \cite{arXiv:1605.01288} has made additional contribution on removing the $\log (n)$ factor by using boosting techniques to boost the in-expectation results of~\cite{DBLP:conf/nips/KorenL15,DBLP:journals/corr/abs/1601.04011}.  

We comment on the extra conditions on the loss functions $f(\w,\z)$. \cite{DBLP:conf/nips/KorenL15} requires that $f(\w,\z)$ is a smooth function of $\w$ for any $\z$ and is bounded over $\W$. 
Both \cite{DBLP:journals/corr/abs/1601.04011} and \cite{arXiv:1605.01288}  require that $\phi_y(z)$ or $f(\w,\z)$ to be Lipschitz continuous. Note that Lipschitz continuity  implies bounded range of the loss function $f(\w,\z)$.   In the present paper, we assume that $f(\w,\z)$ is Lipschitz continuous and smooth over $\W$ for all $\z$. Both conditions are necessary for us to deliver a simple analysis for exp-concave empirical minimization with an arbitrary convex regularizer. We also notice for a twice differentiable smooth and exp-concave function, Lipschitz continuity is automatically  satisfied. 

Next, we briefly mention several  results regarding fast rates of ERM under strong convexity condition - a stronger condition than exp-concavity. Shalev-Shwartz et al.~\cite{COLT:Shalev:2009}  established an $O(1/(n\beta))$ in-expectation convergence bound of ERM  over any bounded convex set  for~(\ref{eqn:opt3}), which requires each individual loss function $f(\w, \z)$ to be a $\beta$-strongly convex function of $\w$. However, in machine learning applications, individual loss functions are usually not strongly convex.
Recently, Zhang et al.~\cite{DBLP:journals/corr/0005YJ17} developed optimistic rates of ERM over a bounded convex set $\W\subseteq\R^d$  for~(\ref{eqn:opt3}), where they assumed $f(\w, \z)$ to be non-negative and smooth. Under $\beta$-strong convexity assumption of $F(\w)$, they established a fast rate of $O(d\log (n)/(n\beta))$ with high probability and a faster rate of $O(1/(n^2\beta))$ when the optimal risk is small and the number of samples is sufficiently large ($n\geq \Omega(d\log n/\beta)$).  In~\cite{NIPS2008_3400}, the authors considered the composite problem~(\ref{eqn:opt}) with $f(\w, \z)$ having a generalized linear form and established a fast rate with high probability of $O(1/(n\beta))$ for $\beta$-strongly convex objective.  

We can also compare with stochastic approximation algorithms. Lan~\cite{Lan:2010:Optimal} presented an optimal method for solving~(\ref{eqn:opt}) without the exp-concavity assumption,  which employs a proximal mapping to handle $R(\w)$ and has a convergence rate of $O(\frac{1}{n} + \frac{\tau}{\sqrt{n}})$, where  $\tau$ is related to the noise in the stochastic gradient.  In contrast, the convergence rate of empirical minimizer shown in this work has a better dependence on $n$. One may also apply the online-to-batch conversion to a variant of ONS that employs the proximal mapping to handle $R(\w)$ to obtain an $O(d\log (n)/(n\beta))$ convergence with high probability. Nonetheless, the resulting algorithm will be at least as expensive as ONS. 
Finally, it is worth mentioning that the linear dependence on the dimensionality $d$ of the convergence rate of the empirical minimizer  for the minimization problem~(\ref{eqn:opt}) over $\W\subseteq\R^d$ is unavoidable even with  smooth  functions $f(\w, \z)$~\cite{NIPS2016_ERM}.

\section{Preliminaries}
In this section, we present some preliminaries. Let $\z\in\Z$ denote a random variable following a distribution $\P$. Denote by $\nabla f(\w, \z)$ the partial gradient in terms of $\w$. Define 
\begin{align}
F(\w) = \E_{\z}[f(\w, \z)], \quad P(\w) = F(\w) + R(\w).
\end{align}
Let $\|\cdot\|_2$ denote the Euclidean norm of a vector. For a positive definite matrix $H\succ 0$, define the $H$-norm $\|\w\|_H = \sqrt{\w^{\top}H\w}$ and its dual norm $\|\w\|_{H^{-1}}=\sqrt{\w^{\top}H^{-1}\w}$. By H\"{o}lder's inequality, we have  $\w^{\top}\u\leq \|\w\|_H\|\u\|_{H^{-1}}$.

A function $f(\w)$ is $\beta$-exp-concave over the domain $\W$ for some $\beta>0$ if the function $\exp(-\beta f(\w))$ is concave over $\W$. If $f(\w)$ is twice differentiable and $\beta$-exp-concave, it follows that
 \begin{align}\label{eqn:psd}
 \nabla^2 f(\w)\succeq \beta \nabla f(\w)\nabla f(\w)^{\top}.
 \end{align} A function $f(\w)$ is a $L$-smooth function with respect to $\|\cdot\|_2$ over $\W$ if the following inequality  holds that for all $\w, \u\in\W$ for some $L>0$
\begin{align}
f(\w)\leq f(\u) + \nabla f(\u)^{\top}(\w - \u) + \frac{L}{2}\|\w - \u\|_2^2.
\end{align}
A function $f(\w)$ is $G$-Lipschitz continuous if  $|f(\w) - f(\u)|\leq G\|\w - \u\|_2, \forall \w, \u\in\W$. 

We will make the following assumptions regarding the loss function $f(\w, \z)$ and the regularizer $R(\w)$. 
\begin{ass}\label{ass:1}
We assume that  (i) $\W$ is a closed and bounded convex set, i.e., there exists $R$ such that $\|\w\|_2\leq R$ for all $\w\in\W$. (ii) $f(\w, \z)$ is a $G$-Lipschitz continuous, $L$-smooth and $\beta$-exp-concave  function of $\w\in\W$ for any $\z\in\Z$. 
(iii) $R(\w)$ is a convex function. 
\end{ass}
{\bf Remark 1:} Note that if $f(\w,\z)$ is twice differentiable, the smoothness and exp-concavity naturally imply  Lipschitz continuity. This can be seen from~(\ref{eqn:psd}) by noting that $\nabla^2 f(\w)\preceq LI$. As a result $\|\nabla f(\w)\|_2\leq \sqrt{L/\beta}$, which implies that $f(\w, \z)$ is  $G=\sqrt{L/\beta}$-Lipschitz continuous.

There are many machine learning problems  satisfying the above assumptions. If we consider the loss function in supervised learning $f(\w, \z) = \phi(\w^{\top}\x, y)$ where $(\x, y)\sim D$ denote a random feature vector and label, then the square loss, logistic loss, squared hinge loss are exp-concave function under appropriate conditions on $(\x, y)$. Let us consider the square loss  $f(\w,(\x, y)) = (\w^{\top}\x - y)^2$ as an example. 

{\bf Example 1}. Suppose $\x$ and $y$ are bounded. W.l.g we can assume $\|\x\|_2\leq 1$ and $|y|\leq R$. Then $\nabla f(\w, (\x, y)) = 2(\w^{\top}\x - y)\x$ and $\nabla^2 f(\w, (\x, y)) = 2\x\x^{\top}$. It then follows that for any $\beta\leq\frac{1}{8R^2} $ and any $\w\in\W$, 
\[
\beta \nabla f(\w, (\x, y))\nabla f(\w, (\x, y))^{\top}\preceq\beta 16R^2\x\x^{\top}\preceq \nabla^2 f(\w, (\x, y)),
\]
which guarantees that $\exp(-\beta f(\w, (\x, y)))$ is a $\beta$-exp-concave function of $\w$. 

Next, we give an example showing that the sum of an exp-concave function and a convex function is not necessarily an exp-concave function. 

{\bf Example 2}. Let  $\w=(w_1, w_2)^{\top}\in[0,1]^2$,  $f(\w)  = (w_1-1)^2$ and $R(\w) = w_2$. To see $f(\w)$ is an exp-concave function, we can show that $\nabla f(\w) = (2(w_1-1), 0)^{\top}$ and $\nabla^2 f(\w) = \left(\begin{array}{cc}2& 0 \\ 0 & 0\end{array}\right)$, then for any $\beta\leq 1/2$ and $\w\in[0, 1]^2$
\[
\beta\nabla f(\w)\nabla f(\w)^{\top} =\beta\left(\begin{array}{cc}4(w_1 - 1)^2& 0 \\ 0 & 0\end{array}\right) \preceq \nabla^2 f(\w).
\]
To see $P(\w)  =f(\w) + R(\w)$ is not an exp-concave function, we can show that $\nabla P(\w) = (2(w_1-1), 1)^{\top}$ and  $\nabla^2 P(\w) = \left(\begin{array}{cc}2& 0 \\ 0 & 0\end{array}\right)$, then  for any $\beta>0$ the following matrix is not  positive semi-definite 
\[
 \nabla^2 P(\w) - \beta\nabla P(\w)\nabla P(\w)^{\top}   = \left(\begin{array}{cc}2& 0 \\ 0 & 0\end{array}\right)-\beta\left(\begin{array}{cc}4(w_1 - 1)^2& 0 \\ 0 & 1\end{array}\right) =  \left(\begin{array}{cc}2 - 4(w_1-1)^2& 0 \\ 0 & -\beta\end{array}\right),
\]
which contracts to~(\ref{eqn:psd}) if $P(\w)$ is an exp-concave function. As a result, $P(\w)$ is not an exp-concave function.  

In our analysis, we will use the covering number of $\W$. A subset $\N(\W,\varepsilon) \subseteq \W$ is called an $\varepsilon$-net of $\W$ if for every $\w \in \W$ one can find $\w' \in \N(\W, \varepsilon)$ so that $\|\w - \w'\|_2 \leq \varepsilon$. The minimal cardinality of an $\varepsilon$-net of $\W$ is called the covering number and denoted by $N(\W,\varepsilon)$. The covering number of the Euclidean ball $\B^d_2=\{\w\in\R^d: \|\w\|_2\leq 1\}$ can be estimated using a standard volume comparison argument~\cite{Convex:body:89}, as follows
$N(\B^d_2, \varepsilon)\leq \left(\frac{3}{\varepsilon}\right)^d, \: \varepsilon\in(0,1).$
The covering numbers are (almost) increasing by inclusion~\cite{OneBit:Plan:LP}: $\W \subseteq \B$ implies $N(\W, 2\varepsilon) \leq  N(\B, \varepsilon)$.  Since $\W\subseteq\B_2^d(R)=\{\w\in\R^d: \|\w\|_2\leq R\}$, then 
\begin{align}
N(\W, \varepsilon)\leq N(\B^d_2(R), \varepsilon/2) \leq\left(\frac{6R}{\varepsilon}\right)^d.
\end{align}

Finally, we present two basic lemmas, which  will be useful  in our analysis. 
\begin{lem}\cite{ML:Hazan:2007}\label{lem:0}
Under {\bf Assumption~\ref{ass:1} }, for any $\w, \w'\in\W$, and  $\z\in\Z$ the following holds for $\sigma \leq \frac{1}{2}\min\left\{\frac{1}{8GR}, \beta\right\}$ 
\begin{align*}
f(\w, \z) \geq f(\w', \z) + (\w - \w')^{\top}\nabla f(\w', \z) + \frac{\sigma}{2}(\w - \w')^{\top}\nabla f(\w', \z)\nabla f(\w', \z)^{\top}(\w - \w').
\end{align*}
\end{lem}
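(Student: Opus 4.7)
The plan is to exploit the defining property of exp-concavity via the standard reduction of Hazan et al.\ and close with an elementary scalar inequality for the logarithm. The parameter restriction $\sigma \leq \beta/2$ guarantees that, since $f(\cdot,\z)$ is $\beta$-exp-concave, it is also $(2\sigma)$-exp-concave; equivalently, $h(\w) \triangleq \exp(-2\sigma f(\w,\z))$ is concave on the convex set $\W$.

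First I would write the tangent inequality for the concave function $h$, namely $h(\w) \leq h(\w') + \nabla h(\w')^{\top}(\w-\w')$, compute $\nabla h(\w') = -2\sigma h(\w')\nabla f(\w',\z)$, divide through by the positive quantity $h(\w')$, and take logarithms to obtain
\[
f(\w,\z) - f(\w',\z) \;\geq\; -\frac{1}{2\sigma}\log\!\bigl(1-2\sigma a\bigr),
\qquad a \triangleq \nabla f(\w',\z)^{\top}(\w-\w').
\]
It then remains to lower-bound the right-hand side by $a + \tfrac{\sigma}{2}a^2$, after which rewriting $a^2 = (\w-\w')^{\top}\nabla f(\w',\z)\nabla f(\w',\z)^{\top}(\w-\w')$ delivers exactly the claimed inequality.

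This is where the second restriction $\sigma \leq \tfrac{1}{16GR}$ is used. By $G$-Lipschitz continuity one has $\|\nabla f(\w',\z)\|_2 \leq G$, and the diameter bound gives $\|\w-\w'\|_2 \leq 2R$, so $|2\sigma a| \leq 4\sigma GR \leq 1/4$; in particular $1-2\sigma a \geq 3/4 > 0$ and the logarithm is well defined. I would then invoke the elementary inequality $-\log(1-y) \geq y + \tfrac{1}{4}y^2$ valid for $|y|\leq 1/4$, applied with $y=2\sigma a$. Since $\sigma^2 a^2 = (2\sigma a)^2/4 = y^2/4$, dividing by $2\sigma$ produces exactly the target lower bound $a+\tfrac{\sigma}{2}a^2$.

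The only mildly delicate step is the scalar inequality $-\log(1-y) \geq y + y^2/4$ on the full interval $|y|\leq 1/4$: for $y \geq 0$ it is immediate since the Taylor series of $-\log(1-y)$ has all positive coefficients, while for $y<0$ the series alternates, so one has to argue via, e.g., the auxiliary function $g(y) = -\log(1-y)-y-y^2/4$ with $g(0)=0$ and $g'(y) = y(1+y)/[2(1-y)]$, which shows $g$ is nonincreasing on $[-1/4,0]$ and nondecreasing on $[0,1/4]$, hence $g\geq 0$ throughout. Matching the two restrictions on $\sigma$ to the exponent in the logarithm and to the quadratic slack is the only careful bookkeeping; everything else is routine.
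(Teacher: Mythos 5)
Your proof is correct and is precisely the standard argument for this lemma from Hazan et al., which the paper itself does not reproduce but simply cites: pass to the $(2\sigma)$-exp-concave surrogate $\exp(-2\sigma f)$, apply the concavity tangent inequality, and control the resulting logarithm with $-\log(1-y)\geq y+y^2/4$ for $|y|\leq 1/4$, the constants $\sigma\leq\beta/2$ and $\sigma\leq\tfrac{1}{16GR}$ serving exactly the two roles you identify. The bookkeeping ($|2\sigma a|\leq 4\sigma GR\leq 1/4$, and the monotonicity argument for the scalar inequality via $g'(y)=y(1+y)/[2(1-y)]$) all checks out.
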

\begin{lem}\label{lem:opt}Let $\w_*$ be an optimal solution to~(\ref{eqn:opt}). Then for any $\w\in\W$ we have
\begin{align*}
(\w - \w_*)^{\top}\nabla F(\w_*)\geq R(\w_*) - R(\w).
\end{align*}
\end{lem}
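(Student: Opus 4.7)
The statement is the first-order optimality condition for the composite minimization~(\ref{eqn:opt}), and the cleanest way to prove it, without invoking subdifferential calculus on the possibly nonsmooth $R$, is to directly exploit convexity of $\W$ together with the convexity of $R$ and differentiability of $F$.

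The plan is as follows. Fix an arbitrary $\w \in \W$ and, for $t \in (0,1]$, set $\w_t = \w_* + t(\w - \w_*)$. Since $\W$ is convex, $\w_t \in \W$, so optimality of $\w_*$ for $P = F + R$ gives
\begin{align*}
F(\w_t) + R(\w_t) \;\geq\; F(\w_*) + R(\w_*).
\end{align*}
Next, I would invoke convexity of $R$ to bound $R(\w_t) \leq (1-t) R(\w_*) + t R(\w)$, and substitute this into the previous inequality to obtain
\begin{align*}
F(\w_t) - F(\w_*) \;\geq\; t\bigl(R(\w_*) - R(\w)\bigr).
\end{align*}

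Finally, I would divide both sides by $t > 0$ and let $t \to 0^+$. Because $f(\cdot,\z)$ is smooth for every $\z$ (Assumption~\ref{ass:1}) and $\W$ is bounded, $F$ is differentiable at $\w_*$ and the directional derivative satisfies
\begin{align*}
\lim_{t \to 0^+} \frac{F(\w_* + t(\w - \w_*)) - F(\w_*)}{t} \;=\; (\w - \w_*)^{\top}\nabla F(\w_*),
\end{align*}
which yields the claimed inequality. There is no real obstacle here; the only mild point of care is to use a convex combination $\w_t$ rather than an arbitrary perturbation, so that $\w_t \in \W$ is guaranteed and the optimality of $\w_*$ may be invoked, and then to handle the nondifferentiable term $R$ via its convexity bound instead of by subgradients. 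Everything else is a limit of a one-sided difference quotient.
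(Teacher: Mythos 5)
Your proof is correct, but it takes a genuinely different route from the paper's. The paper invokes the subdifferential form of the first-order optimality condition: it asserts the existence of some $\v_*\in\partial R(\w_*)$ with $(\w-\w_*)^{\top}(\nabla F(\w_*)+\v_*)\geq 0$ for all $\w\in\W$, and then applies the subgradient inequality $R(\w)\geq R(\w_*)+(\w-\w_*)^{\top}\v_*$ to eliminate $\v_*$. You instead avoid subdifferentials altogether: you test optimality of $\w_*$ against the convex combination $\w_t=\w_*+t(\w-\w_*)\in\W$, use convexity of $R$ along the segment to get $R(\w_t)\leq(1-t)R(\w_*)+tR(\w)$, divide by $t$, and pass to the limit $t\to 0^+$ using differentiability of $F$ (which follows from the smoothness of each $f(\cdot,\z)$ in Assumption~\ref{ass:1}). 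What your argument buys is that it sidesteps the qualification conditions implicitly needed for the paper's step --- the existence of a suitable $\v_*$ rests on a sum rule for the subdifferential of $F+R+\iota_{\W}$, which requires a mild regularity hypothesis (e.g., continuity of $R$ on a neighborhood of $\W$), whereas your difference-quotient argument needs nothing beyond convexity of $\W$ and $R$ and differentiability of $F$ at $\w_*$. The paper's version is shorter and standard; yours is more elementary and slightly more robust. Both yield exactly the stated inequality.
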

The proof of Lemma~\ref{lem:0} can be found in~\cite{ML:Hazan:2007} and is thus omitted. The proof of Lemma~\ref{lem:opt} is simple and presented below.
\begin{proof}[Proof of Lemma~\ref{lem:opt}]
By the optimality condition of~(\ref{eqn:opt}), there exists $\v_*\in\partial R(\w_*)$ such that  $(\w - \w_*)^{\top}(\nabla F(\w_*)+ \v_*)\geq 0$. Since $R(\w)$ is a convex function, then we have $R(\w)\geq R(\w_*) + (\w - \w_*)^{\top}\v_*$. 
Combining the above two inequalities, we have $(\w - \w_*)^{\top}\nabla F(\w_*)\geq R(\w_*) - R(\w)$. 
\end{proof}

\section{Main Result and Analysis}
In the sequel, we let $\sigma$ be a constant as in Lemma~\ref{lem:0} and fix $\w_*$ - an optimal solution of~(\ref{eqn:opt}). Our main result is stated in the following theorem. 
\begin{thm} \label{thm:smooth:convex}
For the stochastic composite minimization problem~(\ref{eqn:opt}), we consider the empirical minimizer $\wh$ by solving~(\ref{eqn:opt2}). Under {\bf Assumptions~\ref{ass:1}}, with probability at least $1 - \delta$, we have
\begin{equation*}
\begin{split}
 P(\wh) - P(\w_*) \leq & O\left(\frac{d\log n}{n}+ \frac{d\log(1/\delta)}{n\sigma}\right).
\end{split}
\end{equation*}
\end{thm}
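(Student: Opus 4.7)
My plan is to view the excess risk $P(\wh)-P(\w_*)$ as a centered empirical process, control it via Bernstein's inequality over an $\varepsilon$-net of $\W$, and then ``self-bound'' it using a variance estimate that $\sigma$-exp-concavity provides for free. I will first use the empirical optimality of $\wh$ together with the fact that the regularizer cancels in $P - P_n = F - F_n$: because $P_n(\wh) \le P_n(\w_*)$,
\[
P(\wh) - P(\w_*) \le [F(\wh)-F_n(\wh)] - [F(\w_*)-F_n(\w_*)] = -\frac{1}{n}\sum_{i=1}^n \xi_i(\wh),
\]
where $\xi_i(\w) := f(\w,\z_i)-f(\w_*,\z_i)-[F(\w)-F(\w_*)]$ is centered. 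This reduction means the rest of the argument will use only $f$ and $F$, so no Lipschitz hypothesis on $R$ is ever needed.

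The technical heart will be a variance bound for $\xi_i(\w)$ that scales with the nonnegative Bregman-like surplus $A(\w) := F(\w)-F(\w_*)-\nabla F(\w_*)^\top(\w-\w_*)$. Writing $f(\w,\z)-f(\w_*,\z) = \nabla f(\w_*,\z)^\top\Delta + r(\w,\z)$ with $\Delta := \w-\w_*$, convexity and $L$-smoothness of $f(\cdot,\z)$ force $r(\w,\z)\in[0,L\|\Delta\|^2/2]$ and $\E[r(\w,\z)] = A(\w)$; applying $\E[r^2]\le \|r\|_\infty \E[r]$ yields
\[
\mathrm{Var}(\xi_i(\w)) \le 2\Delta^\top H\Delta + L\|\Delta\|^2 A(\w), \qquad H := \E[\nabla f(\w_*,\z)\nabla f(\w_*,\z)^\top].
\]
Lemma~\ref{lem:0} applied to $F$ in expectation then collapses $\Delta^\top H\Delta\le 2A(\w)/\sigma$, so that $\mathrm{Var}(\xi_i(\w)) \le K\cdot A(\w)$ with $K = 4/\sigma+4LR^2$. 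Lemma~\ref{lem:opt} will play a dual role: it supplies $A(\w)\le P(\w)-P(\w_*)$ for every $\w\in\W$, which is the hook that makes the subsequent self-bounding step possible.

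With $|\xi_i(\w)| \le 2G\|\Delta\|\le 4GR$ from Lipschitzness of $f$, Bernstein's inequality gives $|\tfrac{1}{n}\textstyle\sum_i \xi_i(\w)| \le \sqrt{2KA(\w)\log(1/\delta)/n} + \tfrac{4GR\log(1/\delta)}{3n}$ at any fixed $\w$. I will apply this at every point $\w'$ of an $\varepsilon$-net $\N\subset \W$ with $|\N|\le (6R/\varepsilon)^d$ and union-bound, obtaining the same inequality with $\log(1/\delta)$ replaced by $\log(|\N|/\delta)$. Picking $\w'\in\N$ closest to $\wh$ and using the $2G$-Lipschitz continuity of $\xi_i$ and of $A$ (both of which depend only on $f$ and $F$) will give $|A(\wh)-A(\w')|\le 2G\varepsilon$ and a matching transfer for the empirical average. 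An AM-GM step, $\sqrt{KA\log/n}\le A/4 + K\log/n$, will then trade the square-root term for $A(\wh)/4 + O(K\log(|\N|/\delta)/n)$. Combining with $A(\wh)\le P(\wh)-P(\w_*)$ and solving,
\[
\tfrac{3}{4}\bigl(P(\wh)-P(\w_*)\bigr) \le O\!\left(G\varepsilon + \tfrac{K\log(|\N|/\delta)}{n} + \tfrac{GR\log(|\N|/\delta)}{n}\right),
\]
and choosing $\varepsilon = 1/n$ with $\log|\N|\le d\log(6Rn)$ and $K = O(1/\sigma + LR^2)$ will produce a rate of the form $O(d\log n/n + d\log(1/\delta)/(n\sigma))$ as claimed.

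\textbf{The hardest step} will be producing the exp-concavity-driven variance bound $\mathrm{Var}(\xi_i(\w)) \le K\cdot A(\w)$. Without this collapse, Bernstein's $\sqrt{\mathrm{Var}/n}$ term scales only as $G\|\Delta\|/\sqrt n$, which cannot be absorbed back into the excess risk, and one is stuck with the slow $O(\sqrt{d/n})$ rate. The key manoeuvre is to route the variance through the nonnegative $A(\w)$ --- which is linked to $P(\w)-P(\w_*)$ by Lemma~\ref{lem:opt} --- so that AM-GM puts $P(\wh)-P(\w_*)$ on both sides of the final inequality and yields the $1/n$ scaling. A secondary subtlety is that the covering and Lipschitz-transfer steps must be carried out strictly with respect to $f$ and $F$: because $R$ is only assumed convex and may have arbitrarily large subgradients near the boundary of $\W$, any appeal to Lipschitz continuity of $R$ would be fatal.
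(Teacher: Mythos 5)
Your overall strategy is sound and genuinely different from the paper's: you work with function values via the basic inequality $P_n(\wh)\le P_n(\w_*)$ and a Bernstein/self-bounding argument over an $\varepsilon$-net, whereas the paper works with gradients, writing $P(\wh)-P(\w_*)\le \nabla P(\wh)^{\top}(\wh-\w_*)$ and splitting this into a gradient-difference term (controlled by a net argument, Lemma~\ref{lem:net}) plus the term $(\nabla F(\w_*)-\nabla F_n(\w_*))^{\top}(\wh-\w_*)$, which is controlled by a single vector concentration at the fixed point $\w_*$ in the $\|\cdot\|_{H^{-1}}$-norm (Lemma~\ref{lem:vec:con}). Your individual reductions check out: the cancellation of $R$ in $P-P_n$, the identity $\E[r(\w,\z)]=A(\w)$ with $0\le r\le L\|\Delta\|^2/2$, the bound $\E[r^2]\le\|r\|_\infty\E[r]$, the collapse $\Delta^{\top}H\Delta\le 2A(\w)/\sigma$ from Lemma~\ref{lem:0} in expectation, and the hook $A(\w)\le P(\w)-P(\w_*)$ from Lemma~\ref{lem:opt} all play exactly the roles the corresponding ingredients play in the paper.

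There is, however, a genuine gap in the final accounting. Because your variance proxy $K\cdot A(\w)$ with $K=4/\sigma+4LR^2$ is fed into Bernstein's inequality \emph{uniformly over the net}, the $1/\sigma$ factor gets multiplied by the metric entropy $\log|\N|\le d\log(6R/\varepsilon)$, and after the AM--GM step you obtain $\tfrac{3}{4}\bigl(P(\wh)-P(\w_*)\bigr)\le O\bigl(K d\log(Rn)/n + K\log(1/\delta)/n\bigr)$, i.e.\ a rate of order $d\log(n/\delta)/(n\sigma)$. This is \emph{not} of the claimed form $O\bigl(d\log n/n + d\log(1/\delta)/(n\sigma)\bigr)$: for constant $\delta$ and small $\sigma$ your bound is worse by a factor of roughly $\min\{1/\sigma,\log n\}$. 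The fix is to decouple the two pieces of your variance bound. The linear part $[\nabla f(\w_*,\z_i)-\nabla F(\w_*)]^{\top}\Delta$, whose variance carries the $1/\sigma$, should not go through the union bound at all --- bound it by $\|\nabla F(\w_*)-\nabla F_n(\w_*)\|_{H^{-1}}\|\Delta\|_{H}$ and apply vector concentration once at the fixed point $\w_*$ (this is precisely the paper's Lemma~\ref{lem:vec:con}, and is why the $d/(n\sigma)$ term in the paper picks up only $\log(1/\delta)$). Only the remainder $r(\w,\z_i)-A(\w)$, whose variance bound $\tfrac{L\|\Delta\|^2}{2}A(\w)$ is $\sigma$-free, needs the covering argument. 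With that modification your route does deliver the stated rate.
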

{\bf Remark 2:} Note that when $R(\w)=0$, we directly obtain a fast rate with high probability of the empirical risk minimizer for the exp-concave risk minimization problem~(\ref{eqn:opt3}). We can also obtain a similar result  for~(\ref{eqn:opt3}) regarding the regularized empirical risk minimizer~(\ref{eqn:opt4}) that provides a different way for solving~(\ref{eqn:opt3}), which is usually preferred over solving the empirical risk minimization problem without any regularization due to that (i) a regularization can lead to a better condition from the perspective of optimization complexity;  and (ii) the prior knowledge about the model can be encoded into the regularizer. 
\begin{thm} \label{thm:smooth:convex2}
For the risk minimization problem~(\ref{eqn:opt3}), we consider the regularized empirical risk minimizer $\wt$ by solving~(\ref{eqn:opt4}). Under {\bf Assumptions~\ref{ass:1}}(i), (ii), and that $g(\x)$ is bounded over $\W$ such that $\sup_{\w, \w'\in\W}|g(\w) - g(\w')|\leq B$, with probability at least $1 - \delta$, we have
\begin{equation*}
\begin{split}
 F(\wt) - \min_{\w\in\W}F(\w)\leq & O\left(\frac{d\log n}{n}+ \frac{d\log(1/\delta)}{n\sigma}\right).
\end{split}
\end{equation*}
\end{thm}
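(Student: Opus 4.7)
The plan is to reduce Theorem~\ref{thm:smooth:convex2} to Theorem~\ref{thm:smooth:convex} by treating the penalty $\frac{1}{n}g(\w)$ as an instance of the arbitrary convex regularizer $R(\w)$ in problem~(\ref{eqn:opt}). Concretely, define $\widetilde{R}(\w) = \frac{1}{n}g(\w)$, which is convex (inheriting from $g$), and let $\widetilde P(\w) = F(\w) + \widetilde R(\w)$. Then $\wt$ in~(\ref{eqn:opt4}) is precisely the empirical minimizer of the composite objective $\widetilde P$, so Assumptions~\ref{ass:1}(i)--(ii) together with the convexity of $\widetilde{R}$ allow a direct invocation of Theorem~\ref{thm:smooth:convex}. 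Note that $\sigma$ in Lemma~\ref{lem:0} depends only on $G$, $R$ and $\beta$, so the rate constants are unchanged by the $1/n$ scaling of the regularizer.

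Let $\wb = \arg\min_{\w\in\W}\widetilde P(\w)$ be the reference point supplied by Theorem~\ref{thm:smooth:convex}, and let $\w_* = \arg\min_{\w\in\W}F(\w)$ denote the minimizer appearing in the statement of Theorem~\ref{thm:smooth:convex2}. Theorem~\ref{thm:smooth:convex} gives, with probability at least $1-\delta$,
\begin{equation*}
F(\wt) + \tfrac{1}{n}g(\wt) - F(\wb) - \tfrac{1}{n}g(\wb) \leq O\!\left(\tfrac{d\log n}{n} + \tfrac{d\log(1/\delta)}{n\sigma}\right).
\end{equation*}
By optimality of $\wb$ for $\widetilde P$, we have $F(\wb) + \tfrac{1}{n}g(\wb) \leq F(\w_*) + \tfrac{1}{n}g(\w_*)$, and the boundedness hypothesis $\sup_{\w,\w'\in\W}|g(\w)-g(\w')|\leq B$ yields $\tfrac{1}{n}(g(\w_*)-g(\wt))\leq B/n$. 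Chaining these two inequalities eliminates the penalty terms and absorbs the extra $B/n$ into the main rate, giving the claimed bound on $F(\wt) - F(\w_*)$.

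There is essentially no genuine obstacle, since the reduction is bookkeeping: the only subtle point to check is that the dependence on $R(\w)$ in the proof of Theorem~\ref{thm:smooth:convex} is through convexity alone and not through any quantitative constant (so that replacing $R$ with $g/n$ does not inflate the rate), and that the shift between $\min_\w F(\w)$ and $\min_\w \widetilde P(\w)$ is indeed only $O(B/n)$ rather than something that scales with the diameter of $\W$. Both are immediate from the argument above, so the proof of Theorem~\ref{thm:smooth:convex2} reduces to a two-line corollary of Theorem~\ref{thm:smooth:convex}.
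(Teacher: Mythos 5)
Your proposal is correct and follows essentially the same route as the paper: invoke Theorem~\ref{thm:smooth:convex} with $R(\w)=\frac{1}{n}g(\w)$, then use the optimality of the penalized population minimizer together with $\sup_{\w,\w'}|g(\w)-g(\w')|\leq B$ to transfer the bound from $\min_{\w}\bigl[F(\w)+\frac{1}{n}g(\w)\bigr]$ to $\min_{\w}F(\w)$ at an extra cost of $O(B/n)$. Your added remark that the constants in Theorem~\ref{thm:smooth:convex} do not depend quantitatively on the regularizer is a worthwhile sanity check the paper leaves implicit.
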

{\bf Remark 3:} This new result not only addresses the open problem raised in~\cite{DBLP:conf/nips/KorenL15} about the high probability bound for the strongly regularized empirical risk minimizer  but also extends the fast rate to any regularized empirical risk minimizer  as long as the regularizer is convex. In comparison, \cite{DBLP:conf/nips/KorenL15}  only provides the expectational fast rate for the  regularized empirical risk minimizer using a strongly convex regularizer.  The additional assumption used in our analysis compared to \cite{DBLP:conf/nips/KorenL15} is the Lipschitz continuity of the loss functions over the domain $\W$, which is mild. 


Below, we will prove Theorem~\ref{thm:smooth:convex} and Theorem~\ref{thm:smooth:convex2}. To prove the theorems, we first establish several lemmas.
\begin{lem}
Suppose  {\bf Assumptions~\ref{ass:1}} hold. For any $\alpha>0$ we have
\begin{align}
\frac{\alpha}{2}\|\w - \w_*\|_H^2\leq P(\w) - P(\w_*) + \frac{\alpha}{2}\|\w - \w_*\|_2^2,
\end{align}
where $H = I  + \frac{\sigma}{\alpha}\E[\nabla f(\w_*, \z) \nabla f(\w_*, \z)^{\top}]$. 
\end{lem}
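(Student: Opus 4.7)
The plan is to unpack the definition of the $H$-norm, reduce the claimed inequality to a statement about $P(\w) - P(\w_*)$ involving only the expected outer-product term, and then derive that statement by combining Lemma~\ref{lem:0} (applied at $\w_*$, in expectation over $\z$) with Lemma~\ref{lem:opt}.

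Concretely, I would first compute
\begin{align*}
\|\w - \w_*\|_H^2 &= (\w-\w_*)^\top H(\w-\w_*) \\
&= \|\w-\w_*\|_2^2 + \frac{\sigma}{\alpha}(\w-\w_*)^\top \E[\nabla f(\w_*,\z)\nabla f(\w_*,\z)^\top](\w-\w_*),
\end{align*}
so after multiplying by $\alpha/2$ and cancelling the $\|\w-\w_*\|_2^2$ term with the one on the right-hand side, the desired inequality reduces to
\[
\frac{\sigma}{2}(\w-\w_*)^\top \E[\nabla f(\w_*,\z)\nabla f(\w_*,\z)^\top](\w-\w_*) \ \leq\  P(\w) - P(\w_*).
\]
Note that this reduction removes all dependence on $\alpha$, which is a nice sanity check.

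Next, apply Lemma~\ref{lem:0} pointwise in $\z$ with the second argument set to $\w_*$ and take expectation over $\z\sim\mathcal P$:
\[
F(\w) \geq F(\w_*) + (\w-\w_*)^\top \nabla F(\w_*) + \frac{\sigma}{2}(\w-\w_*)^\top \E[\nabla f(\w_*,\z)\nabla f(\w_*,\z)^\top](\w-\w_*).
\]
The inner-product term is exactly what Lemma~\ref{lem:opt} handles: it gives $(\w-\w_*)^\top \nabla F(\w_*) \geq R(\w_*) - R(\w)$. Substituting and moving $R$ terms to the left-hand side yields $P(\w) - P(\w_*) \geq \tfrac{\sigma}{2}(\w-\w_*)^\top\E[\nabla f(\w_*,\z)\nabla f(\w_*,\z)^\top](\w-\w_*)$, which is the reduced claim.

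There is no real obstacle here: everything is a routine rearrangement once one notices that the $H$-norm neatly splits into the Euclidean part and the exp-concavity part, letting Lemma~\ref{lem:0} and Lemma~\ref{lem:opt} handle the two ``hard'' pieces separately. The only point that requires a bit of care is that gradients and Hessians are taken at the population optimum $\w_*$ and the expectation passes through the outer product; using the pointwise inequality of Lemma~\ref{lem:0} before taking expectation (rather than after) makes this pass immediate.
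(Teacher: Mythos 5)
Your proof is correct and follows essentially the same route as the paper's: apply Lemma~\ref{lem:0} at $\w'=\w_*$, take expectation, combine with Lemma~\ref{lem:opt} to get $P(\w)-P(\w_*)\geq \frac{\sigma}{2}(\w-\w_*)^{\top}\E[\nabla f(\w_*,\z)\nabla f(\w_*,\z)^{\top}](\w-\w_*)$, and then absorb the $\frac{\alpha}{2}\|\w-\w_*\|_2^2$ term into the definition of the $H$-norm. The only difference is cosmetic: you unpack the $H$-norm at the start while the paper packs it at the end.
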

\begin{proof}
Let $F(\w) = \E_{\z}[f(\w,\z)]$.  We begin with the following inequality in Lemma~\ref{lem:0} 
\begin{align*}
f(\w, \z) \geq f(\w', \z) + (\w - \w')^{\top}\nabla f(\w', \z) + \frac{\sigma}{2}(\w - \w')^{\top}\nabla f(\w', \z)\nabla f(\w', \z)^{\top}(\w - \w')
\end{align*}
Let $\w' = \w_*$ and taking expectation over both sides over the random variable $\z\sim \P$, we have
\begin{align*}
F(\w)\geq F(\w_*) + (\w - \w_*)^{\top}\nabla F(\w_*) + \frac{\sigma}{2}(\w - \w_*)^{\top}\E[\nabla f(\w_*, \z) \nabla f(\w_*, \z)^{\top}](\w - \w_*)
\end{align*}
Adding up the above inequality and the inequality in Lemma~\ref{lem:opt}, we have
\begin{align*}
P(\w) \geq P(\w_*) + \frac{\sigma}{2}(\w - \w_*)^{\top}\E[\nabla f(\w_*, \z) \nabla f(\w_*, \z)^{\top}](\w - \w_*)
\end{align*}
Adding $ \frac{\alpha}{2}\|\w - \w_*\|_2^2$ on both sides and by the definition of $H$, we can finish the proof. 

\end{proof}

\begin{lem} \label{lem:vec:con} Let   $H = I  + \frac{\sigma}{\alpha}\E[\nabla f(\w^*, \z) \nabla f(\w^*, \z)^{\top}]$.  Under {\bf Assumption~\ref{ass:1}}, with probability at least $1-\delta$, for any $\alpha>0$, we have
\begin{equation} \label{eqn:add:2}
\left\|\nabla P(\w_*) - \nabla P_n(\w_*)\right\|_{H^{-1}} \leq \frac{2G\log(2/\delta)}{n} + \sqrt{\frac{2 \alpha d\log(2/\delta)}{n\sigma}}. 
\end{equation}
\end{lem}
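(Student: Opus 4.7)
\textbf{Proof proposal for Lemma~\ref{lem:vec:con}.} The plan is to recognize the left-hand side as the $H^{-1}$-norm of the average of i.i.d.\ mean-zero random vectors and then apply a vector Bernstein inequality. First I would observe that, with the same subgradient of the (possibly non-smooth) regularizer $R$ used on both sides, the $R$-contribution to $\nabla P(\w_*)$ and $\nabla P_n(\w_*)$ cancels, so that
\[
\nabla P(\w_*) - \nabla P_n(\w_*) \;=\; \nabla F(\w_*) - \frac{1}{n}\sum_{i=1}^n \nabla f(\w_*,\z_i) \;=\; -\frac{1}{n}\sum_{i=1}^n X_i,
\]
where $X_i := \nabla f(\w_*,\z_i) - \E[\nabla f(\w_*,\z)]$ are i.i.d.\ mean-zero vectors. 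The problem thus reduces to controlling $\|\tfrac{1}{n}\sum_i X_i\|_{H^{-1}}$.

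Next I would gather the two ingredients needed by Bernstein. Since $H \succeq I$ we have $H^{-1} \preceq I$, so $\|X_i\|_{H^{-1}} \leq \|X_i\|_2 \leq 2G$ by $G$-Lipschitz continuity of $f(\cdot,\z)$ (Assumption~\ref{ass:1}(ii)) combined with Jensen's inequality on the mean term. For the variance I would compute
\[
\E\|X_i\|_{H^{-1}}^2 = \mathrm{tr}\bigl(H^{-1}\,\E[X_i X_i^{\top}]\bigr) \leq \mathrm{tr}\bigl(H^{-1}\,\E[\nabla f(\w_*,\z)\nabla f(\w_*,\z)^{\top}]\bigr),
\]
using $\mathrm{Cov}(Y)\preceq \E[YY^{\top}]$. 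By the very definition $H = I + (\sigma/\alpha)\E[\nabla f(\w_*,\z)\nabla f(\w_*,\z)^{\top}]$, the inner matrix equals $(\alpha/\sigma)(H-I)$, hence
\[
\E\|X_i\|_{H^{-1}}^2 \leq \frac{\alpha}{\sigma}\,\mathrm{tr}\bigl(H^{-1}(H-I)\bigr) \leq \frac{\alpha}{\sigma}\,\mathrm{tr}(I) = \frac{\alpha d}{\sigma}.
\]

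Finally, viewing $\R^d$ as a Hilbert space under the inner product induced by $H^{-1}$ (which is well defined since $H\succ 0$), I would invoke a standard vector Bernstein inequality of the form
\[
\Big\|\tfrac{1}{n}\sum_{i=1}^n X_i\Big\|_{H^{-1}} \;\leq\; \frac{2B\log(2/\delta)}{n} + \sqrt{\frac{2V\log(2/\delta)}{n}}
\qquad\text{w.p.}\;\geq 1-\delta,
\]
with almost-sure norm bound $B = 2G$ and variance proxy $V = \alpha d/\sigma$; substituting these gives exactly~(\ref{eqn:add:2}).

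The only genuinely delicate point is the treatment of $\nabla P(\w_*)$ when $R$ is non-differentiable, which is handled by choosing a consistent element of $\partial R(\w_*)$ so that it cancels in the difference. After that, everything is a routine norm-and-variance calculation followed by an off-the-shelf vector concentration bound, so I do not anticipate a substantial obstacle; the main ``idea'' of the proof is simply the algebraic identity $\E[\nabla f\nabla f^{\top}] = (\alpha/\sigma)(H-I)$, which is what makes the $H^{-1}$ weighting align perfectly with the covariance and produce the $d$ factor in the variance term.
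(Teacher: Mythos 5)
Your proposal follows essentially the same route as the paper: view $\nabla f(\w_*,\z)$ as a random vector in the Hilbert space equipped with the norm $\|\cdot\|_{H^{-1}}$, bound the almost-sure norm via $H\succeq I$, bound the second moment by $\mathrm{tr}\left(H^{-1}\E[\nabla f(\w_*,\z)\nabla f(\w_*,\z)^{\top}]\right)\leq \alpha d/\sigma$, and apply a Pinelis-type vector concentration inequality (the paper's Proposition~\ref{lem:con}). The one discrepancy is cosmetic: Proposition~\ref{lem:con} takes $M$ to be a bound on the \emph{uncentered} variable ($\|\nabla f(\w_*,\z)\|_{H^{-1}}\leq G$) and performs the centering internally, whereas your explicit centering with $B=2G$ would produce $4G\log(2/\delta)/n$ in place of the stated $2G\log(2/\delta)/n$ in the lower-order term --- immaterial for the $O(\cdot)$ bound in Theorem~\ref{thm:smooth:convex}, but not literally the constant in~(\ref{eqn:add:2}).
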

\begin{proof} To prove the above lemma. We need the following concentration result of random vectors. 
\begin{prop} \cite{Smale:learning}. \label{lem:con} Let $\H$ be a Hilbert space equipped with a norm $\|\cdot\|$ and let $\xi$ be a random variable with values in $\H$. Assume $\|\xi\|\leq M < \infty$ almost surely. Denote $\sigma^2(\xi)=\E\left[\|\xi\|^2\right]$. Let  $\{\xi_i\}_{i=1}^m$ be $m$ ($m < \infty$) independent drawers of $\xi$. For any $0 < \delta < 1$, with confidence $1-\delta$,
\[
\left\| \frac{1}{m} \sum_{i=1}^m \left[\xi_i -\E[\xi_i]\right] \right\| \leq \frac{2 M \log(2/\delta)}{m} + \sqrt{\frac{2 \sigma^2(\xi) \log(2/\delta)}{m}}.
\]
\end{prop}
To utilize the above lemma, we consider $\nabla f(\w_*, \z)\in\R^d$ as a random variable in a Hilbert space equipped with a norm $\|\nabla f(\w_*, \z)\|=\|\nabla f(\w_*, \z)\|_{H^{-1}}$. Then we have $\E[\nabla f(\w_*, \z)] = \nabla F(\w_*)$.  To prove Lemma~\ref{lem:vec:con}, we need an upper bound of $\E\left[\|\nabla f (\w_*, \z)\|_{H^{-1}}^2\right]$ and $\|\nabla f(\w_*, \z)\|_{H^{-1}}$. First, we note that $H\succ I$, then $\|\nabla f_i (\w_*)\|_{H^{-1}}\leq \|\nabla f_i(\w_*)\|_2\leq G $. Second, 
\begin{align*}
&\E\left[\|\nabla f (\w_*,\z)\|_{H^{-1}}^2\right] = \text{tr}(H^{-1}\E[\nabla f(\w_*, \z)\nabla f(\w_*, \z)^{\top}])\leq \frac{\alpha}{\sigma}d,
\end{align*}
where $\text{tr}(\cdot)$ denotes the trace  function and the last inequality  uses $ \frac{\alpha}{\sigma}H\succ \E[\nabla f(\w_*, \z)\nabla f(\w_*, \z)^{\top}]$. 
Then, according to Proposition~\ref{lem:con}, with probability at least $1-\delta$, we have
\begin{align*}
 \left\|\nabla P(\w_*) - \nabla P_n(\w_*)\right\|_{H^{-1}}& = \left\|\nabla F(\w_*) - \frac{1}{n}\sum_{i=1}^n \nabla f(\w_*, \z_i) \right\|_{H^{-1}}\\
 &\leq  \frac{2G\log(2/\delta)}{n} + \sqrt{\frac{2\alpha d   \log(2/\delta)}{n\sigma}}.
\end{align*}

\end{proof}

\begin{lem} \label{lem:net} Under {\bf Assumptions~\ref{ass:1}}, with probability at least $1-\delta$, for any $\w \in \W$ and any $\varepsilon>0$, we have
\begin{align*}
&\left\| \nabla P(\w) - \nabla P(\w^*) -  [\nabla P_n(\w) - \nabla P_n(\w^*)]\right\|_2  \leq \frac{LC(\varepsilon) \|\w - \w^*\|_2 }{n}  \\
&+ \frac{LC(\varepsilon)\varepsilon }{n} + \sqrt{\frac{LC(\varepsilon) (P(\w)-  P(\w_*) )}{n} }     + \sqrt{\frac{ LGC(\varepsilon) \varepsilon }{n}} 
 + 2L\varepsilon.
\end{align*}
where $C(\varepsilon) = 4(\log(2/\delta) + d\log(6R/\varepsilon))$. 
\end{lem}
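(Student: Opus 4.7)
The plan is a covering argument over an $\varepsilon$-net of $\W$ combined with the vector-valued Bernstein inequality of Proposition~\ref{lem:con}, applied to gradient differences at a generic point versus at $\w_*$. A first observation simplifies the quantity: because $R$ is deterministic, its (sub)gradient enters $\nabla P(\w)$ and $\nabla P_n(\w)$ with identical coefficient, and likewise at $\w_*$, so it cancels from the difference of interest. It suffices to control
\[
\bigl\|[\nabla F(\w) - \nabla F(\w_*)] - [\nabla F_n(\w) - \nabla F_n(\w_*)]\bigr\|_2,
\]
where $F_n(\w) = \frac{1}{n}\sum_{i=1}^n f(\w,\z_i)$. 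Fix a minimal $\varepsilon$-net $\N(\W,\varepsilon)$ of $\W$; the preliminaries give $|\N(\W,\varepsilon)| \leq (6R/\varepsilon)^d$, so that after union bound the effective log-confidence is $\log(2/\delta) + d\log(6R/\varepsilon) = C(\varepsilon)/4$.

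At a fixed net point $\w'\in\N(\W,\varepsilon)$, I would apply Proposition~\ref{lem:con} to the i.i.d.\ random vectors $\xi_i = \nabla f(\w',\z_i) - \nabla f(\w_*,\z_i)$ in $\R^d$ with the Euclidean norm. By $L$-smoothness of $f(\cdot,\z)$ one has the almost-sure bound $\|\xi_i\|_2 \leq L\|\w'-\w_*\|_2$. For the variance, I would invoke co-coercivity of smooth convex gradients,
\[
\|\nabla f(\w',\z) - \nabla f(\w_*,\z)\|_2^2 \leq 2L\bigl[f(\w',\z) - f(\w_*,\z) - \nabla f(\w_*,\z)^\top(\w'-\w_*)\bigr],
\]
and take expectation to get $\E\|\xi_i\|_2^2 \leq 2L[F(\w') - F(\w_*) - \nabla F(\w_*)^\top(\w'-\w_*)]$. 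A union bound over the net with failure probability $\delta/|\N(\W,\varepsilon)|$ then gives the Bernstein bound at every net point simultaneously.

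Finally, for an arbitrary $\w\in\W$ I would pick $\w'\in\N(\W,\varepsilon)$ with $\|\w-\w'\|_2 \leq \varepsilon$. By $L$-smoothness of each $f(\cdot,\z_i)$ and of $F$, the difference between the target quantity evaluated at $\w$ and at $\w'$ is at most $2L\varepsilon$, producing the last additive term. Substituting $\|\w'-\w_*\|_2 \leq \|\w-\w_*\|_2 + \varepsilon$ into the almost-sure-bound piece yields the $\frac{LC(\varepsilon)\|\w-\w_*\|_2}{n}$ and $\frac{LC(\varepsilon)\varepsilon}{n}$ terms. For the variance piece, I would decompose
\[
F(\w')-F(\w_*)-\nabla F(\w_*)^\top(\w'-\w_*) = \bigl[F(\w)-F(\w_*)-\nabla F(\w_*)^\top(\w-\w_*)\bigr] + \bigl[F(\w')-F(\w)\bigr] + \nabla F(\w_*)^\top(\w-\w'),
\]
bounding the last two terms by $G\varepsilon$ each via $G$-Lipschitzness of $f$, and using Lemma~\ref{lem:opt} to dominate the first bracket by $P(\w)-P(\w_*)$. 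After splitting via $\sqrt{a+b}\leq\sqrt{a}+\sqrt{b}$, the two variance terms $\sqrt{LC(\varepsilon)(P(\w)-P(\w_*))/n}$ and $\sqrt{LGC(\varepsilon)\varepsilon/n}$ emerge, matching the statement. The main obstacle is precisely this last piece of bookkeeping: expressing the variance in terms of $P(\w)-P(\w_*)$ without assuming Lipschitzness of $R$, which is handled by keeping the linear term $\nabla F(\w_*)^\top(\cdot)$ intact until the very end and then invoking Lemma~\ref{lem:opt} once to absorb it into $R(\w)-R(\w_*)$.
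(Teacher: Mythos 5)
Your proof is correct and follows essentially the same route as the paper's: the same $\varepsilon$-net with $\log|\N(\W,\varepsilon)|\le d\log(6R/\varepsilon)$, the same application of Proposition~\ref{lem:con} to $\xi_i=\nabla f(\w',\z_i)-\nabla f(\w_*,\z_i)$ with almost-sure bound $L\|\w'-\w_*\|_2$ and the co-coercivity variance bound, the same union bound over the net, and the same $2L\varepsilon$ transfer from a net point to an arbitrary $\w$. Two of your bookkeeping choices, however, are genuinely better than the paper's. First, your opening observation that the (sub)gradient of $R$ cancels from $\nabla P-\nabla P_n$ is what actually licenses the $2L\varepsilon$ step; the paper instead asserts that $P$ and $P_n$ are $L$-smooth, which is not literally true for a nonsmooth convex $R$. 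Second, and more substantively, in converting the variance at the net point $\w'$ into $P(\w)-P(\w_*)$ at the arbitrary point $\w$, the paper applies Lemma~\ref{lem:opt} at $\w'$ to get $P(\w')-P(\w_*)$ and then bounds $|P(\w)-P(\w')|\le G\varepsilon$ --- a step that implicitly requires $P$, hence $R$, to be $G$-Lipschitz, whereas $R$ is only assumed convex. You instead keep the quantity $F(\w')-F(\w_*)-\nabla F(\w_*)^\top(\w'-\w_*)$ intact, shift from $\w'$ to $\w$ using only the $G$-Lipschitzness of $F$ and $\|\nabla F(\w_*)\|_2\le G$, and invoke Lemma~\ref{lem:opt} once at $\w$ at the end; this repairs the paper's implicit assumption at the harmless cost of a factor $\sqrt{2}$ on the $\sqrt{LGC(\varepsilon)\varepsilon/n}$ term (from $2G\varepsilon$ in place of $G\varepsilon$ inside the square root), which is absorbed by the big-$O$ in the theorems.
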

The proof of the above lemma  is similar to the proof of Lemma 1 in~\cite{DBLP:journals/corr/0005YJ17} and is deferred to supplement.  The idea of the proof is that: first we establish an upper bound for a fixed $\w\in\N(\W,\varepsilon)$ using Proposition~\ref{lem:con} and then use the union bound and the covering number of $\W$ to get an upper bound for any $\w\in\N(\W,\varepsilon)$. Then we utilize the property of the $\varepsilon$-net to prove the inequality in the lemma. 

\subsection{Proof of Theorem~\ref{thm:smooth:convex}}
Let $\varepsilon>0$ and $H = I + \frac{\sigma}{\alpha}\E[\nabla f(\w_*, \z)\nabla f(\w_*, \z)^{\top}]$ with $\alpha>0$. The values of $\alpha$ and $\varepsilon$ will be decided later. 
\begin{align*}
 P(\wh) - P(\w_*)  &\leq   \nabla P(\wh)^{\top}( \wh - \w_*)=  (\nabla P(\wh) - \nabla P(\w_*))^{\top}(\wh - \w_*) +  \nabla P(\w_*)^{\top}( \wh - \w_*) \\
&= (\nabla P(\wh) - \nabla P(\w_*) - [\nabla P_n(\wh) - \nabla P_n(\w_*)])^{\top} (\wh - \w_* ) \\
&+( \nabla P(\w_*) - \nabla P_n(\w_*))^{\top}( \wh - \w_* ) + \nabla P_n(\wh)^{\top}(\wh - \w_* ) \\
&\leq (\nabla P(\wh) - \nabla P(\w_*) - [\nabla P_n(\wh) - \nabla P_n(\w_*)])^{\top}( \wh - \w_* )\\
& +(\nabla P(\w_*) - \nabla P_n(\w_*)^{\top} (\wh - \w_*),
\end{align*}
where the first inequality uses the convexity of $P(\w)$ and  the second inequality uses the optimality condition of $\wh$, i.e.,  $ \langle \nabla P_n(\wh), \wh - \w_*\rangle\leq 0$. 
Then we have
\begin{align*}
P(\wh) - P(\w_*)&\leq \left\| \nabla P(\wh) - \nabla P(\w_*) - [\nabla P_n(\wh) - \nabla P_n(\w_*)]\right\|_2 \|\wh - \w_*\|_2\\
&+ \left\|\nabla P(\w_*) - \nabla P_n(\w_*)\right\|_{H^{-1}}\|\wh - \w_*\|_{H},
\end{align*}

By Lemma~\ref{lem:net} and Lemma~\ref{lem:vec:con}, with probability at least $1-2\delta$, we have
\begin{equation} \label{eqn:add:3}
\begin{split}
& P(\wh) - P(\w_*)  \leq  2L \varepsilon \left\| \wh - \w_* \right\|_2   + \frac{LC(\varepsilon)\|\wh - \w_*\|_2^2 }{n}   +  \frac{LC(\varepsilon) \varepsilon \|\wh - \w_*\|_2}{n}\\
&  +  \left\| \wh - \w_* \right\|_2 \sqrt{\frac{LC(\varepsilon)(P(\wh)-  P(\w_*) )}{n}}  +\left\| \wh - \w_* \right\|_2 \sqrt{\frac{LGC(\varepsilon) \varepsilon}{n}}\\
& +  \frac{2G\log(2/\delta)\left\| \wh - \w_* \right\|_{H} }{n}  +   \left\| \wh - \w_* \right\|_{H} \sqrt{\frac{2\alpha d \log(2/\delta)}{n\sigma}}.
\end{split}
\end{equation}
Next, we bound the last four terms in the R.H.S using H\"{o}lder's inequality.  \begin{align}
&\left\| \wh - \w_* \right\|_2 \sqrt{\frac{LC(\varepsilon)(P(\wh)-  P(\w_*))}{n}}  \leq \frac{3LC(\varepsilon) \left\| \wh - \w_* \right\|_2^2}{ 2n} + \frac{P(\wh)-  P(\w_*) }{6}, \label{eqn:inequality:1} \\
&\left\| \wh - \w_* \right\|_2 \sqrt{\frac{LC(\varepsilon) G \varepsilon}{n}} \leq \frac{LC(\varepsilon) \left\| \wh - \w_* \right\|_2^2}{ n} +  \frac{G \varepsilon}{4},\label{eqn:inequality:2}\\
& \left\| \wh - \w_* \right\|_{H} \sqrt{\frac{2\alpha d \log(2/\delta)}{n\sigma}}\leq \frac{\alpha}{12}\|\w_* - \wh\|_{H}^2 + \frac{6d\log(2/\delta)}{n\sigma},\label{eqn:inequality:3}\\
&  \frac{2G\log(2/\delta)\left\| \wh - \w_* \right\|_{H} }{n}\leq  \frac{\alpha}{12}\|\w_*- \wh\|_H^2 + \frac{12G^2\log^2(2/\delta)}{n^2\alpha}\label{eqn:inequality:4}.
\end{align}

Combining the inequalities in~(\ref{eqn:add:3}),~(\ref{eqn:inequality:1}),~(\ref{eqn:inequality:2}), (\ref{eqn:inequality:3}),  and~(\ref{eqn:inequality:4}), with probability $1-2\delta$ we have
\begin{align*}
P(\wh) - P(\w_*)&\leq  2L \varepsilon \left\| \wh - \w_* \right\|_2+ \frac{4LC(\varepsilon)\|\wh - \w_*\|_2^2 }{n} +  \frac{LC(\varepsilon) \varepsilon \|\wh - \w_*\|_2}{n}  \\
&  +  \frac{G \varepsilon}{4} + \frac{P(\wh) - P(\w_*)}{6} + \frac{\alpha}{6}\|\wh - \w_*\|_H^2  +  \frac{12G^2\log^2(2/\delta)}{n^2\alpha} +  \frac{6d\log(2/\delta)}{n\sigma}     \\
&\leq 2L \varepsilon \left\| \wh - \w_* \right\|_2 + \frac{4LC(\varepsilon)\|\wh - \w_*\|_2^2 }{n} +  \frac{LC(\varepsilon) \varepsilon \|\wh - \w_*\|_2}{n} \\
&+  \frac{G \varepsilon}{4}   + \frac{P(\wh) - P(\w_*)}{2} + \frac{\alpha}{6}\|\wh - \w_*\|_2^2+  \frac{12G^2\log^2(2/\delta)}{n^2\alpha} +  \frac{6d\log(2/\delta)}{n\sigma},  
\end{align*}
where the second inequality uses Lemma~\ref{lem:0}. 
Then we have
\begin{align*}
\frac{P(\wh) - P(\w_*)}{2}&\leq  \frac{4LC(\varepsilon)\|\wh - \w_*\|_2^2 }{n} +  \frac{LC(\varepsilon) \varepsilon \|\wh - \w_*\|_2}{n} \\
&+  \frac{6 d\log(2/\delta)}{n\sigma}    +  \frac{12G^2\log^2(2/\delta)}{n^2\alpha} + \frac{G\varepsilon}{4} + 2L\varepsilon\|\wh - \w_*\|_2 + \frac{\alpha}{6}\|\wh  - \w_*\|_2^2\\
\end{align*}
Let $\varepsilon= \frac{1}{n}$, $\alpha = \frac{\log(2/\delta)}{n}$ and noting that $C(\varepsilon) = 2(\log(2/\delta) + d \log(6Rn))$, $\|\w - \w_*\|_2\leq 2R$, we have
\begin{align*}
&P(\wh) - P(\w_*)\leq  \frac{64LR^2(\log(2/\delta) + d\log(6Rn)) }{n} +  \frac{8LR(\log(2/\delta) + d\log(6Rn))}{n^2} \\
&+  \frac{12 d\log(2/\delta)}{n\sigma}    +  \frac{24G^2\log(2/\delta)}{n} + \frac{G}{2n} +\frac{ 8LR }{n}+ \frac{4R^2\log(2/\delta)}{3n}=O\left(\frac{d\log n}{n} + \frac{d\log(1/\delta)}{n\sigma}\right).
\end{align*}

\subsection{Proof of Theorem~\ref{thm:smooth:convex2}}
We utilize the Theorem~\ref{thm:smooth:convex} to prove Theorem~\ref{thm:smooth:convex2}. First, we define and recall some notations. 
\begin{align*}
&\Fh(\w) = F(\w) + \frac{1}{n}g(\w), \quad \Fh_n(\w) = F_n(\w) + \frac{1}{n}g(\w)\\
&\wh_* = \arg\min_{\w\in\W}\Fh(\w), \quad \wt = \arg\min_{\w\in\W}\Fh_n(\w), \quad \w^* = \arg\min_{\w\in\W}F(\w)
\end{align*}
where $F(\w)=\E[f(\w,\z)]$ and $F_n(\w) = \frac{1}{n}\sum_{i=1}^n f(\w, \z_i)$. According to Theorem~\ref{thm:smooth:convex}, the following inequality holds with high probability $1-\delta$, 
\begin{align*}
\Fh(\wt) - \Fh(\wh_*)\leq O\left(\frac{d\log n}{n} + \frac{d\log(1/\delta)}{n\sigma}\right).
\end{align*}
Plugging the definition of $\Fh$ we have
\begin{align}\label{eqn:com1}
F(\wt) - F(\wh_*)\leq \frac{1}{n}(g(\wh_*) - g(\wt))+ O\left(\frac{d\log n}{n} + \frac{d\log(1/\delta)}{n\sigma}\right)\leq O\left(\frac{d\log n}{n} + \frac{d\log(1/\delta)}{n\sigma}\right)
\end{align}
where the last inequality uses the assumption $\sup_{\w, \w'}|g(\w) - g(\w')|\leq B$. 

Due to that $\wh_*$ is the minimizer of $\Fh(\w)$, then 
\begin{align*}
 F(\wh_*) + \frac{1}{n}g(\wh_*) \leq F(\w^*) + \frac{1}{n}g(\w^*)
\end{align*}
Then 
\begin{align}\label{eqn:com2}
F(\wh_*)\leq F(\w^*) + \frac{1}{n}(g(\w^*) - g(\wh_*))\leq  F(\w^*)  + \frac{B}{n}
\end{align}
Combining~(\ref{eqn:com1}) and~(\ref{eqn:com2}), the following inequality holds with high probability $1-\delta$
\begin{align*}
F(\wt) - F(\w^*)\leq O\left(\frac{d\log n}{n} + \frac{d\log(1/\delta)}{n\sigma}\right)
\end{align*}
which finishes the proof. 

\section{Conclusion}
In this paper, we have developed a simple analysis of fast rats for  empirical minimization with exponential concave loss functions and  an arbitrary convex regularizer. This represents the first result of its kind. The proof is elementary only exploiting  the covering number of a finite-dimensional bounded set and a concentration inequality of random vectors. Our framework also induces a unified fast rate results for exponential concave empirical risk minimization without and with  any convex regularizer. An open problem remains  is  whether the $\log(n)$ factor can be removed without using the boosting technique.

\bibliographystyle{abbrv}
\bibliography{ref,all}

\appendix
\section{Proof of Lemma~\ref{lem:net}}\label{sec:lem}
The proof  is similar to the proof of Lemma 1 in~\cite{DBLP:journals/corr/0005YJ17}.
Denote by $\N(\W, \varepsilon)$ the $\varepsilon$-net of $\W$ with minimal cardinality. By the covering number theory, we have $\log |\N(\W, \varepsilon)|= N(\W, \varepsilon)\leq  d\log(6R/\varepsilon)$.  To prove the upper bound for all $\w\in\W$, we first consider a fixed point  in the $\N(\W, \varepsilon)$ denoted by $\w\in\N(\W, \varepsilon)$. Since $f(\cdot, \z)$ is $L$-smooth for any $\z\in\Z$, we have
\begin{equation} \label{eqn:smooth:lemma}
\left\| \nabla f(\w, \z) - \nabla f(\w_*, \z) \right\|_2\leq 
 L \|\w - \w_*\|_2.
\end{equation}
Because $f(\cdot, \z)$ is both convex and $L$-smooth, by (2.1.7) of \cite{nesterov2004introductory}, we have
\[
\left\| \nabla f(\w, \z) - \nabla f(\w_*, \z) \right\|_2^2  \leq 2L \left(f(\w, \z)-  f(\w_*, \z)  - \langle \nabla f(\w_*, \z), \w-\w_* \rangle \right).
\]
Taking expectation over both sides, we have
\begin{align*}
 \E \left[ \left\| \nabla f(\w, \z) - \nabla f(\w_*, \z) \right\|_2^2\right]
&\leq  2L \left(F(\w)-  F(\w_*)  - \langle \nabla F(\w_*), \w-\w_* \rangle \right)\\
&\leq  2L \left(F(\w)-  F(\w_*)  - (R(\w_*) - R(\w)) \right) = 2L(P(\w) - P(\w_*)),
\end{align*}
where the second inequality uses Lemma~\ref{lem:opt}. 
Following Proposition~\ref{lem:con},  with probability at least $1-\delta$, we have
\[
\begin{split}
& \left\| \nabla P(\w) - \nabla P(\w_*) - [\nabla P_n(\w) - \nabla P_n(\w_*)] \right\|_2 \\
&= \left\| \nabla F(\w) - \nabla F(\w_*) - \frac{1}{n} \sum_{i=1}^n [ \nabla f(\w, \z_i) - \nabla f(\w_*, \z_i)] \right\| \\
 &\leq  \frac{2 L \|\w - \w_*\|_2 \log(2/\delta)}{n} + \sqrt{\frac{4 L(P(\w)-  P(\w_*) ) \log(2/\delta)}{n}}.
\end{split}
\]
By taking the union bound over all $\w \in \N(\W , \varepsilon)$,  
with probability at least  $1-\delta$, the following holds for all $\w\in\N(\W, \varepsilon)$, 
\begin{align}\label{eqn:anyw}
&\left\| \nabla P(\w) - \nabla P(\w_*) - [\nabla P_n(\w) - \nabla P_n(\w_*)] \right\|_2 \\
&\leq \frac{2 L \|\w - \w_*\|_2\left( \log(2/\delta) + \log|\N(\W,\varepsilon)|\right)}{n} + \sqrt{\frac{4 L(P(\w)-  P(\w_*) )\left( \log(2/\delta) + \log|\N(\W,\varepsilon)|\right)}{n}}\notag\\
&\leq \frac{L \|\w - \w_*\|_2C(\varepsilon)}{n} + \sqrt{\frac{ L(P(\w)-  P(\w_*) )C(\varepsilon)}{n}}.\notag
\end{align}
Next, we consider any $\w\in\W$. There exists $\w'\in\N(\W, \varepsilon)$ such that $\|\w - \w'\|_2\leq\varepsilon$. Then 
\begin{align*}
&\left\| \nabla P(\w) - \nabla P(\w_*) - [\nabla P_n(\w) - \nabla P_n(\w_*)] \right\|_2 \\
&\leq \left\| \nabla P(\w') - \nabla P(\w_*) - [\nabla P_n(\w') - \nabla P_n(\w_*)] \right\|_2\\
& + \left\| \nabla P(\w) - \nabla P(\w_*)\right\|_2 +\left\| [\nabla P_n(\w) - \nabla P_n(\w')] \right\|_2  \\
&\leq \underbrace{ \left\| \nabla P(\w') - \nabla P(\w_*) - [\nabla P_n(\w') - \nabla P_n(\w_*)] \right\|_2}\limits_{A}+ 2L\varepsilon,
\end{align*}
where the last inequality follows that both $P(\w)$ and $P_n(\w)$ are $L$-smooth functions. We can bound the term $A$ using~(\ref{eqn:anyw}), then we have with probability at least $1-\delta$, 
\begin{align*}
&\left\| \nabla P(\w) - \nabla P(\w_*) - [\nabla P_n(\w) - \nabla P_n(\w_*)] \right\|_2 \\
&\leq\frac{L \|\w' - \w_*\|_2C(\varepsilon)}{n} + \sqrt{\frac{ L(P(\w')-  P(\w_*) )C(\varepsilon)}{n}}+ 2L\varepsilon\\
&\leq\frac{L( \|\w - \w_*\|_2 + \|\w - \w'\|_2)C(\varepsilon)}{n} + \sqrt{\frac{ L(P(\w)-  P(\w_*)  + |P(\w) - P(\w')|)C(\varepsilon)}{n}}+ 2L\varepsilon\\
&\leq\frac{L( \|\w - \w_*\|_2 + \varepsilon)C(\varepsilon)}{n} + \sqrt{\frac{ L(P(\w)-  P(\w_*)  + G\varepsilon)C(\varepsilon)}{n}}+ 2L\varepsilon.
\end{align*}
By rearranging the terms, we can finish the proof. 

\end{document}